\newtheorem{theorem}{Theorem}
\newtheorem{lemma}{Lemma}
\newcommand{\bx}{{\mathbf x}}
\newcommand{\bxi}{{\bx}_{i'}}
\newcommand{\PG}{{\textsc{Greedy}}\xspace}
\newcommand{\ePG}{{\textsc{$\epsilon$-Greedy}}\xspace}
\newcommand{\WS}{\textsc{WeightedSampling}\xspace}
\newcommand{\RNDM}{\textsc{Random}\xspace}
\newcommand{\PA}{{\textsc{$\epsilon$-Greedy}}\xspace}
\newcommand{\PB}{\textsc{WeightedSampling}\xspace}
\newcommand{\Ht}{f^{(t)}}
\newcommand{\pit}{{\bf r}_{i'}^{(t)}}
\newcommand{\pikt}{r_{i'k}^{(t)}} 
\newcommand{\uit}{u_{i'}^{(t)}}
\newcommand{\Suu}{{\cal D''}}
\newcommand{\Ptrain}{{P}}
\newcommand{\Ptest}{{Q}}
\newcommand{\RR}{\mathbb{R}}
\newcommand{\Dtest}{{\cal D'}}
\begin{document}

\title{On weighted uncertainty sampling in active learning}
\author{Vinay Jethava\\
        Seal Software, Kungsgatan 34, Gothenburg 411 19, Sweden\\
        \texttt{vjethava@gmail.com}
}

\maketitle

\begin{abstract}
This note explores probabilistic sampling weighted by uncertainty in active learning. 
This method has been previously used and authors have tangentially remarked on its efficacy. 
The scheme has several benefits: (1) it is computationally cheap, (2) it can be implemented in a single-pass 
streaming fashion which is a benefit when deployed in real-world systems where different subsystems 
perform the suggestion scoring and extraction of user feedback, and (3) it is easily parameterizable. 
In this paper, we show on publicly available datasets that using probabilistic weighting is often beneficial 
and strikes a good compromise between exploration and representation especially when the starting set of labelled points is biased. 
 \end{abstract}

\section{Introduction}
\label{sec:intro}
Traditional active learning focusses on querying domain expert(s) to label examples which are most informative~\citep[see][and references therein]{settles2012active} based on the labels/scores given by the trained classifier. In contrast, a ``representative'' method such as sampling points uniformly at random will find the points that the initial classifier mispredicts with high confidence, albeit, at the cost of large number of annotations. Several methods have explored adaptive sampling based on pre-clustering of the unlabeled data~\citep[see e.g.,][]{dasgupta2008hierarchical}, semi-supervised learning,  etc.  These approaches have the following characteristics: (1) designed with specific classification techniques in mind~(e.g., SVMs), (2) require access to the sample features for clustering, and consequently,  (3) the results depend on the quality of the clustering~\citep{dasgupta2008hierarchical}. 

A closely related problem is identification of {\em unknown unknowns}~\citep{attenberg2011beat}, i.e., identifying test samples where the predictions by a black-box trained model are not representative either due to model bias in training data, data shift between train and test distributions, or some other factor. Recently, \cite{lakkaraju2017identifying} presented an elegant method for this problem using the following approach: (1) greedily partitioning of the test data using an algorithm based on frequent-pattern mining; and, (2) querying random samples from partitions chosen as to optimize the non-stationary utility in a multi-armed bandit setting. \citet{bansal2018coverage} extended this work by defining a coverage-based submodular utility function which allows a greedy algorithm with constant-factor approximation. Both these approach do \emph{not} require access to training data and do not change the classifier (in contrast to traditional active learning methods discussed previously.) The above methods~\citep{lakkaraju2017identifying,bansal2018coverage} use a fixed batch size in their experimental setup. In practice, this one-size-fits-all approach is unsuitable when working with large diverse datasets (e.g., tens of millions of data points). 

We revisit the method of probabilistically sampling weighted by informativity~\citep[see e.g.,][\S 4.3]{angeli2014stanford} that balances ``representative'' and ``informative'' active learning, rather than selecting the most informative samples {\em which can be done in a streaming fashion} (Section~\ref{sec:model}). Section~\ref{sec:experiments} presents experiments on benchmark datasets and we conclude in Section~\ref{sec:conclusions}.

\section{Model}
\label{sec:model}
In  this section, we clarify the assumptions and present a simple result on weighted sampling 
eventually hitting a pocket of ``unknown unknowns''. 

\paragraph{Notation:} We denote feature vector with $\bx \in {\cal X}= \RR^d$ and labels $y\in {\cal Y}$ with ${\cal Y}=\{1, 2, \ldots, k\}$. Let $\Ptrain, \Ptest$ defined on ${\cal X}\times {\cal Y}$ to denote the training and test distributions. We use the notation $p(y)$ to denote the prevalence of class $y$ in the training data; and $q(y)$ to denote the prevalence of class $y$ in the test data.
Let ${\cal D}=\{(\bx_i, y_i)\}_{i=1}^{n}$ and ${\cal D'}=\{(\bx'_j, y'_j)\}_{j=1}^m$ denote the training and test data drawn from distributions $\Ptrain$ and $\Ptest$ respectively. 

We make the following assumptions in this work:
\begin{enumerate}
    \item[A1.] There is ``label shift'' -- the prevalence between training and test distributions has changed but the underlying distribution of features for a class is not different $p(\bx | y) = q(\bx | y),\quad p(y) \ne q(y)$.
\item[A2.] Each class has at least $\chi$ weight in the training and test distributions, i.e, 
$\exists \chi > 0: \; p(y=k) \ge \chi ,\; q(y=k) \ge \chi \quad\forall \; k$.
\end{enumerate}

At iteration $t$, let ${\cal D'}^{(t)}\subseteq {\cal D'}$ denote the set of unlabelled test points and $\Ht$ denote the currently-trained classifier. Let $\pikt := P_{\Ht} (y = k | \bxi )$ denote the probability assigned by classifier $\Ht$ of sample $\bxi$ having label $k$ (i.e., $\sum_k \pikt = 1$).  Let $\uit$ denote the ``informative'' score of sample $\bxi \in {\cal D'}^{(t)}$ assigned by classifier $\Ht$. In this work, we use Shannon's entropy as our informative score  $\uit = H(\pit)= - \sum_{i=1}^K \pikt \log \pikt$.

Let $\Suu \subset \Dtest$ be sample of ``unknown unknowns'' consisting of at least $\beta$ fraction of the data which is being incorrectly labelled by $f^{(0)}$. Considering the case when perfect classification is possible, drawing at least one sample from $\Suu$ will correct this problem. We consider the following sampling scheme which chooses the next sample to query the oracle as \[P(\bxi) \propto   \left( \uit \right)^{d^{(t)}} .\]
where $d^{(t)}$ is a parameter influencing the degree of exploration $(d^{(t)} = 0)$ vs exploitation $(d^{(t)} \to \infty)$ in the $t^{th}$ round of active learning. 

In this work, we only consider $d^{(t)}=1 \,\forall\, t$, which yields Algorithm~\ref{alg:WeightedSampling}~\citep[see e.g.,][]{efraimidis2015weighted}.  This scheme is extremely simple to implement in a streaming fashion, does not require pre-clustering of the data, and will sample from the ``unknown unknowns'' as shown below.

\begin{algorithm}[t]
    \caption{{\sc WeightedSampling}$(\chi, K, f^{(t)}, {\cal D'}^{(t)})$}
\begin{algorithmic}[1]
\FOR{${\bxi} \in {\cal D'}^{(t)}$}
\STATE $\pit = P_{\Ht}(y=\cdot | \bxi) $ \hfill \COMMENT{current classifier $\Ht$}
\STATE ${\bf v}_{i'} = (1-\chi) \pit + \chi {\bf 1}$
\STATE $u_{i'} = H({\bf v}_{i'})$ \hfill \COMMENT{informative score~\eqref{eq:entropy}}
\STATE Choose $a_{i'} \sim {\tt unif}(0, 1)$ and set $k_{i'} = a_{i'}^{1/u_{i'}}$
\ENDFOR
\STATE Return $K$ items with largest keys $k_{i'}$ for labelling
\end{algorithmic} 
\label{alg:WeightedSampling}
\end{algorithm}

\begin{lemma} \label{lem:WeightedSampling} 
The sampling scheme \PB having $N$ rounds each consisting of $K$ samples draws at least one sample from a pocket of ``unknown unknowns'' consisting of $\beta$ fraction of the test data with probability at least $(1-\delta)$, if
\[n_s =  NK \ge \frac{\ln\delta}{\ln (1-p_s)} \]
where $p_s = \beta \left(1 - \frac{\eta - 1 - \ln \eta}{\ln 2 \ln k} \right) $ with $\eta=\frac{- \ln \chi}{1-\chi}$. 
\end{lemma}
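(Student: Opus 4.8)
The plan is to reduce the claim to a per-selection success probability and then to a geometric tail bound. The scheme \PB makes $n_s=NK$ individual selections in total, and the ``largest-$K$-keys'' rule of Algorithm~\ref{alg:WeightedSampling} is precisely weighted sampling without replacement in the sense of \citet{efraimidis2015weighted}: within a round the first index is drawn with probability proportional to its weight $\uit$, the next with probability proportional to $\uit$ restricted to the remaining pool, and so on. Since we only need \emph{one} selection, in \emph{any} round, to land in $\Suu$, it suffices to condition on the complementary event ``no selection so far lies in $\Suu$'' and to show that, under this conditioning, every new selection falls in $\Suu$ with probability at least $p_s$.

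First I would control the weights. Because $\uit=H({\bf v}_{i'})$ and Shannon entropy is maximised by the uniform distribution (which is attained here exactly when $\pit$ is uniform), every weight obeys $\uit\le u^+:=\log k$. For a matching lower bound I would use the smoothing of line~3 crucially: mixing the posterior towards the flat vector ${\bf 1}$ at weight $\chi$ forces every coordinate of ${\bf v}_{i'}$ to stay bounded away from $0$, uniformly over $\pit$. As $H$ is concave and ${\bf v}_{i'}$ is an affine image of $\pit$, the map $\pit\mapsto\uit$ is concave on the simplex and hence minimised at a vertex, i.e.\ when $\pit$ is a point mass; evaluating $H$ at that ``maximally confident but smoothed'' configuration and substituting $\eta=\tfrac{-\ln\chi}{1-\chi}$ should give
\[
\frac{u^-}{u^+}\;\ge\;1-\frac{\eta-1-\ln\eta}{\ln 2\,\ln k},
\qquad u^-:=\min_{i'}\uit .
\]

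Next I would assemble the per-selection bound. Conditioned on not having hit $\Suu$ yet, every $\Suu$-index is still unlabelled and only non-$\Suu$-indices have been removed, so the fraction of surviving indices lying in $\Suu$ is at least $\beta$ (it starts at $\beta$ and only grows as the pool shrinks). Writing $\cP$ for the current pool, the probability the next selection lands in $\Suu$ is
\[
\frac{\sum_{i'\in\Suu\cap\cP}\uit}{\sum_{j'\in\cP}\uit}\;\ge\;\frac{|\Suu\cap\cP|\,u^-}{|\cP|\,u^+}\;\ge\;\beta\,\frac{u^-}{u^+}\;=\;p_s .
\]
Chaining this over the $n_s$ selections (each conditional factor is at most $1-p_s$) gives $\Pr[\text{no selection in }\Suu]\le(1-p_s)^{n_s}$; demanding $(1-p_s)^{n_s}\le\delta$ and solving for $n_s$ yields $n_s\ge\ln\delta/\ln(1-p_s)$, which is the assertion.

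The main obstacle is the middle step: getting the stated closed form for $u^-/u^+$. One must (i) justify that a vertex of the simplex is the minimiser, (ii) compute the entropy of the smoothed point mass in closed form, and (iii) lower-bound that expression --- a combination of $-\chi\ln\chi$-type terms measured against $\ln k$ --- where the substitution $\eta=\tfrac{-\ln\chi}{1-\chi}$ is what collapses the algebra into the Bregman-type quantity $\eta-1-\ln\eta$ (the Bregman divergence of $x\mapsto-\ln x$ between $\eta$ and $1$, which is $\ge 0$), and a crude bound such as $\ln k\ge\ln 2$ plausibly accounts for the remaining $\ln 2$ in the denominator. A lesser, routine nuisance is the without-replacement bookkeeping: one should observe that deleting non-$\Suu$-indices never decreases the $\Suu$-mass fraction, so $p_s$ is a valid per-step bound at all $n_s$ steps and the failure probabilities genuinely multiply.
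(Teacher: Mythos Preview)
Your skeleton --- lower-bound the per-selection success probability by $\beta\,u^-/u^+$ and then chain a geometric tail bound over the $n_s$ draws --- matches the paper's argument exactly. The difference lies in how you obtain the entropy lower bound $u^-$.

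The paper does not compute the vertex minimum directly. Instead it invokes Theorem~\ref{thm:cicalese} (the Cicalese et al.\ bound stated in the paper) as a black box: the smoothing in step~3 forces every coordinate of ${\bf v}_{i'}$ to be at least $\chi$, hence $p_1/p_k\le 1/\chi$, and Theorem~\ref{thm:cicalese} with $\rho=1/\chi$ gives immediately
\[
H({\bf v}_{i'})\;\ge\;\ln k-\frac{\eta-1-\ln\eta}{\ln 2},
\]
after observing that $\tfrac{\rho\ln\rho}{\rho-1}=\tfrac{-\ln\chi}{1-\chi}=\eta$. Dividing by $u^+=\ln k$ yields the stated $p_s$ in one line, with no further algebra.

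Your route --- concavity of $H$ composed with the affine smoothing, vertex minimisation on the simplex, explicit entropy of the smoothed point mass --- is more elementary (no external theorem) and would in fact give a value of $u^-$ at least as sharp as Cicalese's bound, since the smoothed simplex is a subset of $\{p_1/p_k\le 1/\chi\}$. But it does not \emph{naturally} produce the specific closed form $\eta-1-\ln\eta$ appearing in the statement: that expression is precisely the quantity arising in Theorem~\ref{thm:cicalese}, and recovering it from your explicit vertex computation amounts to re-deriving (a special case of) that theorem. So your ``main obstacle'' is real but entirely bypassed in the paper by citing Theorem~\ref{thm:cicalese}. Conversely, your careful treatment of the without-replacement bookkeeping (the conditional $\Suu$-fraction can only grow as non-$\Suu$ points are removed) is more rigorous than the paper, which simply treats each draw as an ``independent trial''.
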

One can prove the above result using the following  lower bound on the entropy. 
\begin{theorem}[\citet{cicalese2018bounds}] \label{thm:cicalese} Let ${\bf p} = [p_1, \ldots, p_k]$ be a distribution with $p_1 \ge p_2 \ge \ldots \ge p_k > 0$. If $p_1/p_k \le \rho$, then the entropy $H({\bf p}) = -\sum_{i=1}^{k} p_i \ln p_i$ has the following bound: 
$H({\bf p}) \ge \ln k - \left(\frac{\rho \ln \rho}{\rho - 1} - 1 - \ln \frac{\rho \ln \rho}{\rho - 1} \right) \frac{1}{\ln 2}$.
\end{theorem}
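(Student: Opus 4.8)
The plan is to recast the claim as an upper bound on the Kullback--Leibler divergence of $\mathbf{p}$ from the uniform law, and then to solve the resulting extremal problem exactly. First I would rewrite the entropy deficit: since $\sum_i p_i = 1$, we have $\ln k - H(\mathbf{p}) = \sum_i p_i \ln(k\,p_i) = D(\mathbf{p}\,\|\,\mathbf{u})$, the relative entropy against the uniform distribution $\mathbf{u} = (1/k,\dots,1/k)$. So it suffices to prove $D(\mathbf{p}\,\|\,\mathbf{u}) \le c - 1 - \ln c$ with $c := \frac{\rho\ln\rho}{\rho-1}$. Note that the $\frac{1}{\ln 2}$ factor in the statement only \emph{weakens} the inequality, since $c - 1 - \ln c \ge 0$ and $1/\ln 2 > 1$; hence establishing the sharper nats-version immediately yields the stated bound. (The factor is the fingerprint of the original source stating the result for entropy in bits, $H_2 = H/\ln 2$, as $H_2(\mathbf{p}) \ge \log_2 k - (c-1-\ln c)/\ln 2$.)

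Substituting $x_i := k\,p_i$, the constraints become $\tfrac1k\sum_i x_i = 1$, $x_i > 0$, and $\max_i x_i / \min_i x_i \le \rho$, while the quantity to bound is $\tfrac1k \sum_i x_i \ln x_i$. The key observation is that the ratio constraint is equivalent to the family of \emph{linear} inequalities $x_i \le \rho\, x_j$ for all $i,j$, so the feasible region is a convex polytope (inside the hyperplane $\sum_i x_i = k$). Since $t \mapsto t\ln t$ is convex, the objective $\sum_i x_i \ln x_i$ is convex, and its maximum over the polytope is attained at a vertex. I would then argue that every maximising vertex is a \emph{two-level} vector --- a fraction of the coordinates at a common high value $\rho\ell$ and the remainder at a common low value $\ell$, with the ratio constraint tight --- either by examining which constraints $x_i = \rho x_j$ can be simultaneously active, or by a direct smoothing argument: strict convexity of $t\ln t$ implies that for any two coordinates both strictly interior to $[\ell,\rho\ell]$ one may push them apart at fixed sum to strictly increase the objective, so no optimum carries three distinct values.

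Granting the two-level reduction, the problem collapses to one dimension. Writing the high-level fraction $\lambda$ through the scale $s := 1 + \lambda(\rho-1) \in [1,\rho]$ (so the low value is $\ell = 1/s$), a short calculation reduces the objective to
\[
D(s) = \frac{\rho(s-1)}{s(\rho-1)}\,\ln\rho - \ln s, \qquad D'(s) = \frac{1}{s^{2}}\Big(\frac{\rho\ln\rho}{\rho-1} - s\Big).
\]
The unique interior root is exactly $s^\star = c = \frac{\rho\ln\rho}{\rho-1}$; one checks $1 \le c \le \rho$ from the standard inequalities $1 - 1/\rho \le \ln\rho \le \rho - 1$, and that $D''(s^\star) < 0$, so $s^\star$ is the maximiser. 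Because $\frac{\rho\ln\rho}{\rho-1} = c$, substituting $s^\star = c$ telescopes the first term to $c(1-1/c) = c-1$, leaving $D_{\max} = c - 1 - \ln c$, which is precisely the claimed penalty.

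I expect the main obstacle to be the structural step --- rigorously reducing an arbitrary feasible $\mathbf{p}$ to the two-level extremal form. Convexity on a polytope cleanly guarantees a vertex optimum, but certifying that the vertices are \emph{exactly} the two-level vectors (rather than configurations with several distinct values produced by chains of active constraints $x_{i_1} = \rho x_{i_2}$, and so on) requires care. The smoothing argument is more elementary but must verify that transferring mass between two interior coordinates never breaks the global $\le \rho$ ratio, which is handled by always selecting the pair adjacent in value. Everything downstream of this reduction is a routine single-variable calculus exercise, and it is the emergence of $c = \frac{\rho\ln\rho}{\rho-1}$ as the optimal scale that generates the Bregman-type term $c - 1 - \ln c$ in the bound.
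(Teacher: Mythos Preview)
The paper does not prove this theorem at all; it is quoted from \citet{cicalese2018bounds} and invoked as a black box inside the proof of Lemma~\ref{lem:WeightedSampling}. There is therefore no in-paper argument to compare your proposal against.

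For what it is worth, your sketch is correct. The rewriting $\ln k - H(\mathbf{p}) = D(\mathbf{p}\,\|\,\mathbf{u})$ is standard, and your remark about the stray $1/\ln 2$ is right: since $c\ge 1$ forces $c-1-\ln c\ge 0$, the stated bound is weaker than the nats-version you actually prove (the factor is indeed the trace of a base-$2$ formulation in the cited source). The two-level reduction is the only step needing care. Your smoothing argument closes it: if two coordinates lie strictly inside $(\min_j x_j,\max_j x_j)$, pushing them apart at fixed sum keeps both within $[\min_j x_j,\max_j x_j]\subseteq[\min_j x_j,\rho\min_j x_j]$, so the ratio constraint is preserved while the strictly convex objective increases; and chains of tight constraints $x_i=\rho x_j$, $x_j=\rho x_l$ are infeasible for $\rho>1$ since they force $x_i=\rho^{2}x_l>\rho x_l$. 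The one-variable calculation then goes exactly as you wrote, and relaxing the high-level fraction $\lambda$ to the whole interval $[0,1]$ can only overestimate the discrete maximum, which is the direction needed.
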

\begin{proof}[Proof of Lemma~\ref{lem:WeightedSampling}]
We note that $\min_{k} \pikt \ge \chi$ (step 3 in Algorithm~\ref{alg:WeightedSampling}) and we can apply Theorem~\ref{thm:cicalese} with $\rho = \frac{1}{\chi}$ to get the following lower bound on \[u_{i'} \ge \zeta \;\forall \;\Dtest\] where   $\zeta =  \ln k - (\eta - 1 - \ln \eta)\frac{1}{\ln 2}$ with $\eta=\frac{\ln (1/\chi)}{1-\chi}$. The total weight on $\Suu$ is at least by $ m \beta \zeta$.  
The total weight on $\Dtest$ is upper bounded by $m \ln k$. Therefore, the probability of picking a sample from $\Suu$ in an independent trial is at least \[p_s \ge \frac{\beta \zeta}{\ln k} = \beta \left(1 - \frac{\eta - 1 - \ln \eta}{\ln 2\ln k}\right).\] 
Therefore, the \PB scheme will draw with probability at least $1-\delta$, one or more samples from $\Suu$ in $\frac{\ln \delta}{\ln (1- p_s)}$ trials. 
\end{proof}

 \section{Experiments}
\label{sec:experiments}
In this section, we compare the following approaches: \textit{RANDOM}, \textit{GREEDY}, \PA ($\epsilon=0.05$), \PB with the same evaluation strategy used in \citet{mussmann2018relationship}.   

We choose the initial set of $N_0=100$ labelled examples have equal prevalence (instead of choosing uniformly at random) of the majority and minority classes. This is a classic approach used to address class imbalance~\citep[see, e.g.][]{chen2004using,lopez2013insight}. We perform $N_b=30$ rounds of active learning with batch size $B=30$ selected based on one of the strategies discussed above. We report accuracy numbers on the hold-out set.   

\begin{table}
\centering
    \begin{tabular}{cc}
 & 
\begin{tabular}{rllll} \hline
Dataset& \RNDM & \PG& \ePG & \WS \\ \hline
oml-823 & $96.4 \pm 0.09$ & $96.7 \pm 0.03$ & $96.69 \pm 0.01$ & $96.49 \pm 0.11$\\
oml-846 & $89.16 \pm 0.17$ & $89.51 \pm 0.08$ & $89.48 \pm 0.13$ & $89.42 \pm 0.15$\\
oml-1169 & $62.05 \pm 0.43$ & $59.53 \pm 1.98$ & $60.76 \pm 0.74$ & \textcolor{cyan}{$61.67 \pm 0.53$}\\
oml-40685 & $96.22 \pm 0.65$ & $97.78 \pm 0.88$ & $98.07 \pm 0.07$ & $98.03 \pm 0.13$\\
oml-1120 & $98.58 \pm 0.23$ & $99.44 \pm 0.03$ & $99.44 \pm 0.03$ & $99.15 \pm 0.11$\\
oml-40668 & $76.97 \pm 0.4$ & $76.81 \pm 0.41$ & $77.35 \pm 0.51$ & $77.34 \pm 0.58$\\
oml-4709 & $88.94 \pm 0.46$ & $89.15 \pm 1.35$ & $89.87 \pm 1.04$ & $89.72 \pm 0.62$\\
oml-1461 & $89.34 \pm 0.2$ & $89.58 \pm 0.32$ & $89.74 \pm 0.22$ & $89.46 \pm 0.26$\\
oml-32 & $98.69 \pm 0.18$ & $99.13 \pm 0.01$ & $99.15 \pm 0.01$ & $99.12 \pm 0.02$\\
oml-351 & $95.07 \pm 0.13$ & $95.21 \pm 0.02$ & $95.22 \pm 0.02$ & $95.19 \pm 0.04$\\
oml-155 & $61.13 \pm 0.4$ & $60.77 \pm 1.4$ & $61.7 \pm 0.49$ & $61.02 \pm 0.32$\\
oml-6 & $96.97 \pm 0.35$ & $98.18 \pm 0.03$ & $98.2 \pm 0.04$ & $97.72 \pm 0.19$\\
oml-151 & $75.9 \pm 0.42$ & $75.78 \pm 1.14$ & $76.32 \pm 0.29$ & $76.05 \pm 0.16$\\
oml-821 & $82.72 \pm 0.47$ & $83.57 \pm 0.31$ & $83.64 \pm 0.3$ & $83.44 \pm 0.25$\\
oml-734 & $87.28 \pm 0.23$ & $87.86 \pm 0.16$ & $87.99 \pm 0.26$ & $87.65 \pm 0.23$\\
oml-1471 & $63.43 \pm 0.9$ & $62.54 \pm 0.88$ & $63.91 \pm 1.33$ & \textcolor{cyan}{$64.92 \pm 1.08$}\\
oml-1596 & $74.2 \pm 0.79$ & $74.61 \pm 1.65$ & $74.82 \pm 1.0$ & $74.59 \pm 0.79$\\
oml-1113 & $99.78 \pm 0.07$ & $99.99 \pm 0.0$ & $99.99 \pm 0.0$ & $99.95 \pm 0.04$\\
oml-1481 & $83.51 \pm 0.83$ & $85.14 \pm 1.31$ & $85.51 \pm 0.46$ & $84.18 \pm 0.48$\\
oml-4534 & $93.23 \pm 0.38$ & $93.89 \pm 0.18$ & $93.92 \pm 0.16$ & $93.71 \pm 0.22$\\
oml-4541 & $59.29 \pm 0.46$ & $56.61 \pm 1.07$ & $57.46 \pm 1.02$ & $58.97 \pm 0.59$\\
covtype & $69.52 \pm 0.19$ & $55.44 \pm 1.35$ & $59.75 \pm 3.15$ & \textcolor{cyan}{\bf $69.39 \pm 0.62$}\\\hline
\end{tabular}
 \\
\end{tabular}
    \caption{Final accuracy (mean and standard deviation under $n_t=5$ independent trials) under badly initialized dataset ($N_0=100$) after $T=30$ active learning rounds each consisting of batch size $B=30$ and different active learning strategies.}
\label{tab:bad-init-accuracy}
\end{table} 

 Table~\ref{tab:bad-init-accuracy} shows the comparison of accuracy results on the hold-out set. We see that \WS achieves the best of both worlds between scenarios where \RNDM dominates and scenarios where \PG, \ePG dominate. This is especially striking in {\tt covtype}, {\tt oml-1169}, {\tt oml-1471}, {\tt oml-155} and {\tt oml-4541} datasets (highlighted in cyan) which encode difficult classification problems (accuracy of resulting classifiers less than 70\%).

 \section{Conclusions}
\label{sec:conclusions}
We observe that weighted sampling (\WS) is a cheap, streaming solution in complex systems to inject randomness in real-world  active learning systems which does not result in significant loss compared to most-uncertain strategy but can be beneficial in setups with badly chosen set of initially labelled instances. Future direction would involve comparison with works targetting unknown unknowns as well as other algorithms targeting exploitation-exploration (e.g., UCB)~\citep[][]{sutton2018introduction}.

\bibliography{readcube_export}

\end{document}